\let\NAT@parse\undefined
\newcommand{\figref}[1]{Fig.~\ref{#1}}
\newcommand{\tabref}[1]{Table~\ref{#1}}
\newtheorem{theorem}{Theorem}[section]
\definecolor{darkblue}{RGB}{0.15,0.15,0.55}
\definecolor{lightgrey}{RGB}{0.75,0.75,0.75}
\providecommand{\codecomment}[1]{\textcolor{lightgrey}{\dotfill}\textcolor{darkblue}{//\,\textrm{#1}}}
\definecolor{myred}{RGB}{215,48,39}
\definecolor{myblue}{RGB}{69,117,180}
\definecolor{myorange}{RGB}{252,141,89}
\definecolor{mylightblue}{RGB}{145,191,219}
\definecolor{MYlightblue}{RGB}{217,95,2} 
\definecolor{MYdarkblue}{RGB}{117,112,179} 
\definecolor{MYgreen}{RGB}{27,158,119}
\newcommand{\jnote}[1]{{\xxnote{JA}{blue}{#1}}}
\newcommand{\xxnote}[3]{}
  \renewcommand{\xxnote}[3]{\color{#2}{#1: #3}}
\useunder{\uline}{\ul}{}
\title{\LARGE \bf ReloPush: Multi-object Rearrangement in Confined Spaces\\ with a Nonholonomic Mobile Robot Pusher} 
\author{Jeeho Ahn and Christoforos Mavrogiannis\thanks{Department of Robotics, University of Michigan, Ann Arbor, USA. Email: \tt\small \{jeeho, cmavro\}@umich.edu}}
\begin{document}



\maketitle
\thispagestyle{empty}
\pagestyle{empty}

\begin{abstract}
We focus on push-based multi-object rearrangement planning using a nonholonomically constrained mobile robot. The simultaneous geometric, kinematic, and physics constraints make this problem especially challenging. Prior work on rearrangement planning often relaxes some of these constraints by assuming dexterous hardware, prehensile manipulation, or sparsely occupied workspaces. Our key insight is that by capturing these constraints into a unified representation, we could empower a constrained robot to tackle difficult problem instances by modifying the environment in its favor. To this end, we introduce a \emph{Push-Traversability} graph, whose vertices represent poses that the robot can push objects from, and edges represent optimal, kinematically feasible, and stable transitions between them. Based on this graph, we develop \emph{ReloPush}, a graph-based planning framework that takes as input a complex multi-object rearrangement task and breaks it down into a sequence of single-object pushing tasks. We evaluate ReloPush across a series of challenging scenarios, involving the rearrangement of densely cluttered workspaces with up to nine objects, using a 1/10-scale robot racecar. ReloPush exhibits orders of magnitude faster runtimes and significantly more robust execution in the real world, evidenced in lower execution times and fewer losses of object contact, compared to two baselines lacking our proposed graph structure.




\end{abstract}


\section{Introduction}\label{sec:introduction}

Autonomous mobile robots have revolutionized fulfillment by offering a robust and scalable solution for completing massive rearrangement tasks. Fulfillment sites tend to be highly structured, requiring extensive workspace engineering, like the installation of rails along rectilinear grids, and specialized docking mechanisms for handling packages. While effective, this approach can be prohibitively costly and impractical for many critical domains like construction, waste management, and small/medium-sized warehouses. These environments involve rearrangement tasks for a wide range of object geometries, require precise navigation among static and movable objects contrast, and impose practical constraints like respecting tight geometric boundaries and robot kinematics.




A practical technique to expand the diversity of objects that mobile robots can rearrange is pushing. Pushing is a form of \emph{nonprehensile manipulation}, a class of manipulation that does not require secure grasping but rather exploits the task mechanics to rearrange an object~\citep{lynch1999nonprehensile}. This is appealing because it enables the rearrangement of large, heavy, or unstructured objects even by simple mechanisms. However, pushing introduces constraints to robot motion: to ensure object stability, robots need to avoid abrupt turns and high accelerations.

\begin{figure}[!ht]
\begin{subfigure}{0.485\linewidth}
\includegraphics[width=\linewidth]{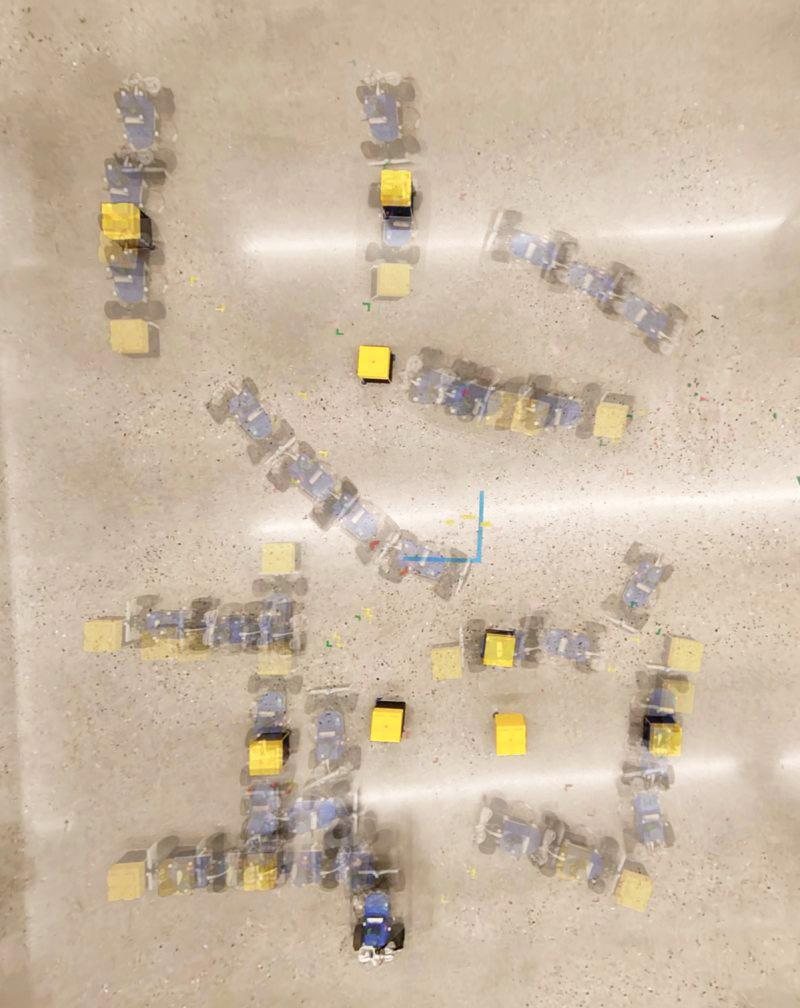}
\caption{Executing a rearrangement plan.}
\label{fig:relopush_before}
\end{subfigure}
\begin{subfigure}{0.485\linewidth}
\includegraphics[width=\linewidth]{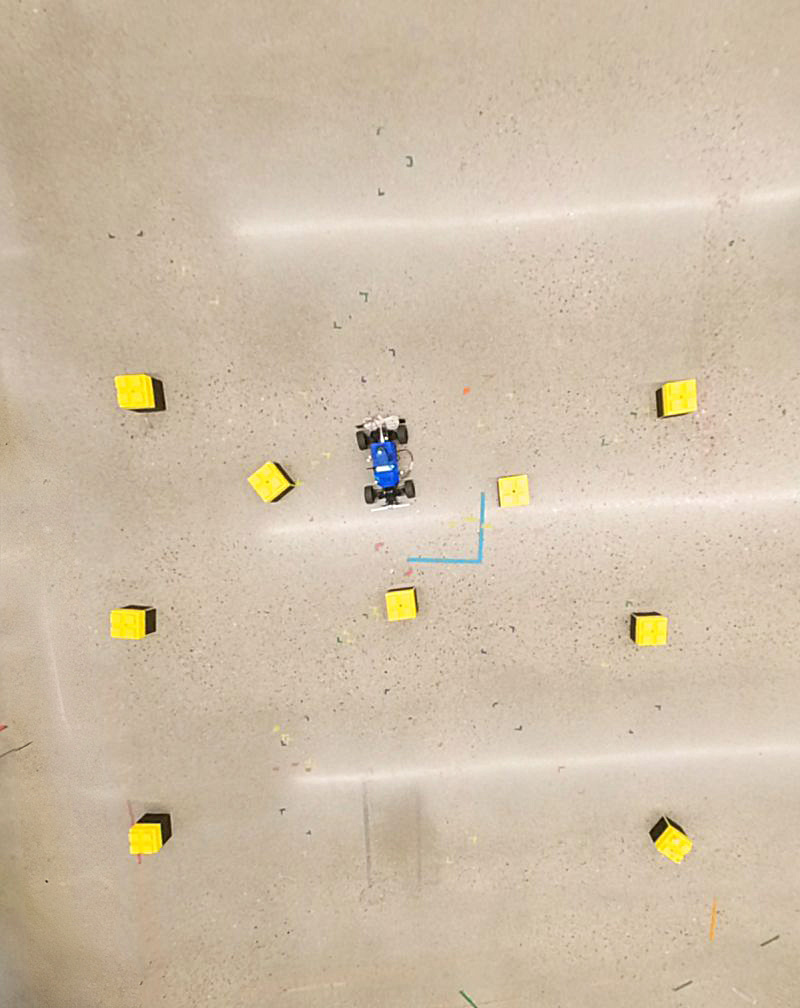}
\caption{Resulting rearrangement.}
\label{fig:relopush_after}
\end{subfigure}
  \caption{In this work, we describe ReloPush, a planning framework for tackling multi-object rearrangement tasks with a nonholonomic mobile robot pusher.
  }
  \label{fig:main}
\end{figure}

Real-world applications impose additional pragmatic constraints that need to be taken into account, i.e., geometric constraints imposed by the workspace boundary and by obstacles within, and robot kinematics (e.g., nonholonomic constraints). These constraints, in conjunction with push-stability constraints may render many practical rearrangement tasks infeasible. Prior work has tackled constrained rearrangement tasks by embracing the paradigm of \emph{planning among movable objects}~\citep{stilman2008planning,stilman2007manipulation}, in which the robot strategically modifies its environment to simplify planning. Previous applications emphasize the use of dexterous manipulators that are often capable of unconstrained overhand grasping, ignore orientation specifications on goal object poses, and assume generous workspace boundaries. These assumptions severely limit the potential of robot deployments for achieving practical productivity in real-world applications.

In this work, we focus on the problem of rearranging a set of objects into a set of desired final poses within a confined workspace via pushing, using a nonholonomic mobile robot pusher. Our key insight is that we could capture geometric, physics, and kinematic constraints into a unified representation that could enable the robot to understand when and how to modify the environment to complete its downstream rearrangement task. To this end, we introduce a \emph{push-traversability} graph whose edges represent kinematically feasible and stable object displacements. By planning on this graph, we achieve fast and effective rearrangement planning that can handle multiple objects in a confined space as shown in~\figref{fig:main}. Our code can be found at~\href{https://github.com/fluentrobotics/ReloPush}{https://github.com/fluentrobotics/ReloPush} and footage from our experiments at~\href{https://youtu.be/_EwHuF8XAjk}{https://youtu.be/\_EwHuF8XAjk}.





\section{Related Work}\label{sec:related-work}



\begin{figure*}[!ht]
    \centering
    \includegraphics[width=\linewidth]{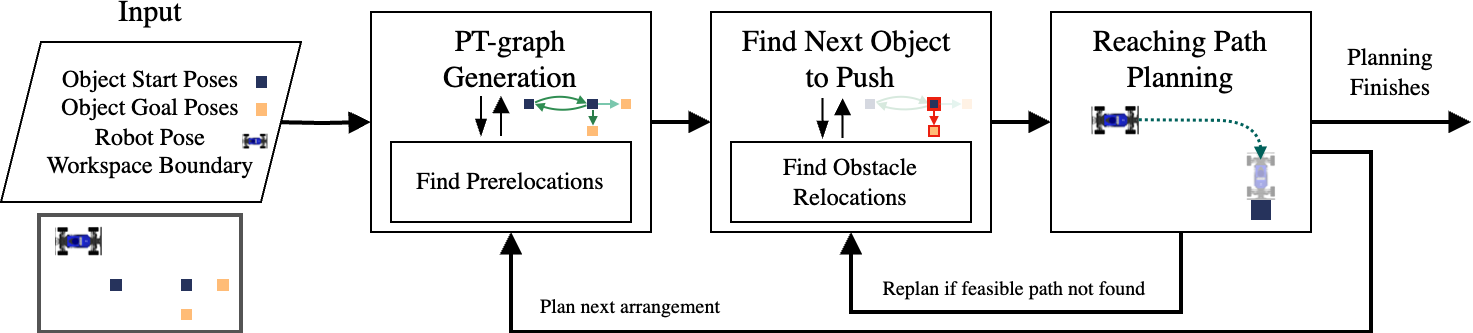}
    \caption{The ReloPush architecture. Given  the initial pose of the pusher and a rearrangement task in the form of start/goal object poses, ReloPush plans an efficient sequence of rearrangement subtasks to be executed by the robot via pushing. 
    }
    \label{fig:architecture}
\end{figure*}


\textbf{Planning among Movable Obstacles.} Many real-world environments include movable obstacles that robots can manipulate to free up space to make path planning feasible. This problem, first formulated by~\citet{wilfong1988motion} is PSPACE-hard when the final positions of objects are specified (these instances are known as~\emph{labeled}) and NP-hard otherwise (instances known as~\emph{unlabeled}). However, the relevance of the problem has motivated extensive investigation. \citet{chen1990practical} devised a hierarchical planner that proved effective in environments with polygonal obstacles. \citet{stilman2005navigation} demonstrated real-time planning among movable obstacles in realistic, cluttered households. Later approaches exhibited important properties such as probabilistic completeness~\citep{van2010path} and extensions to manipulation with high-DoF (degrees of freedom) arms~\citep{stilman2007manipulation,King-RSS-13,haustein2015kinodynamic}. 

We tailor the concept of planning among movable objects to a nonholonomic pusher, leveraging insights from Dubins path classification~\citep{shkel2001classification,lim2023circling}. In cases where the optimal rearrangement path violates the workspace boundary, we modify the object's starting pose to transition to a different optimal solution. When an obstacle is blocking an optimal object rearrangement, we remove the object to clear the way, by building on prior work on traversability graphs~\citep{nam2021fast}.


\textbf{Rearrangement Planning.} In rearrangement problems, the goal is to move a set of objects to (possibly predefined) goal poses. These are divided into two classes~\citep{pan2022algorithms}: \emph{monotone} instances, where each object only needs to be moved once, and \emph{non-monotone} ones, requiring more than one movement per object. Much of the prior work focuses on monotone instances~\citep{stilman2007manipulation,stilman2008planning,barry2013manipulation,talia2023pushr}, but real-world, densely cluttered spaces often give rise to non-monotone instances. While these have been shown to be NP-hard~\citep{huang2019large}, recent algorithms have demonstrated practical performance in manipulation tasks~\citep{han2018complexity,krontiris2015dealing,huang2019large,ren2022rearrangement,ren2024montecarlo}. While many works account also for geometric~\citep{ahn2023coordination,ahn2021integrated,nam2021fast} and kinodynamic constraints~\citep{ren2023kinodynamic}, most approaches make simplifying assumptions such as tasks without object orientation constraints, holonomic robots~\citep{ota2002rearrangement}, and high-DoF manipulators~\citep{krontiris2014rearranging,krontiris2015dealing,ren2024montecarlo}. 

We target monotone and non-monotone, labeled problem instances for a pusher that accounts for geometric, kinematic, and stability constraints. Through a novel \emph{push-traversability} graph whose edges incorporate all of these constraints, we manage to plan for challenging rearrangement instances in orders of magnitude lower runtimes than baselines. 

\textbf{Nonprehensile Rearrangement.} Some works tackle rearrangement tasks using nonprehensile manipulation. \citet{dogar2012planning} describe a planner that iteratively removes clutter via pushing to retrieve objects of interest in cluttered tabletops. \citet{King-RSS-13} develop a trajectory optimizer that pushes objects to simplify downstream manipulation tasks. \citet{huang2019large} use iterated local search to handle problems like singulation and sorting in densely cluttered tabletops. \citet{talia2023pushr} use multiagent pathfinding to distribute rearrangement tasks to a team of nonholonomic pushers. Some works focus on modeling the dynamics of pushing~\citep{Bauza2017,huang2022interleaving} to inform motion planning~\citep{haustein2015kinodynamic} whereas others learn adaptive pushing control policies~\citep{li2018push,yuan2018rearrangement} using data-driven techniques. For many domains, a practical assumption involves quasistatic pushing, for which analytical motion models exist~\citep{lynch1996stable,mason1986mechanics,goyal1991planar,howcutkosky}. Quasistatic pushing can empower simple control laws or even open-loop systems to perform robustly on many real-world problems.



While much of prior on nonprehensile rearrangement planning assumes high-DoF manipulators~\citep{King-RSS-13,huang2019large,Bauza2017,krontiris2014rearranging,krontiris2015dealing,ren2024montecarlo,haustein2015kinodynamic,stilman2007manipulation}, we demonstrate the practicality of quasistatic pushing on a nonholonomic mobile robot pusher~\citep{srinivasa2019mushr}. We move beyond prior work on nonholonomic nonprehensile rearrangement planning~\citep{talia2023pushr,king2016centric,talia2023pushr} by handling non-monotone problem instances in densely cluttered spaces (up to nine blocks of $0.15m$ side in a $4\times 5.2 m^2$ area). 

\section{Problem Statement}\label{sec:statement}


We consider a mobile robot \emph{pusher} and a set of $m$ polygonal \emph{blocks} in a workspace $\mathcal{W}\subset SE(2)$. 
We denote the state of the pusher as $p\in \mathcal{W}$ and the states of the blocks as $o_j\in \mathcal{W}$, $j\in\mathcal{M} = \{1,\dots, m\}$. 
The pusher follows rear-axle, simple-car kinematics $\dot{p} = f(p, u)$, where $u$ represents a control action (speed and steering angle), and may push objects using a flat bumper attached at its front (see~\figref{fig:exp_setup}). The goal of the pusher is to rearrange the blocks from their starting configuration, $O^s = (o_1^s,\dots,o_m^s)$, to a goal configuration, $O^g = (o_1^g,\dots,o_m^g)$. We seek to develop a planning framework to enable the pusher to efficiently rearrange all objects into their goal poses. We assume that the pusher has accurate knowledge of its ego pose at all times, and of the starting configuration of all objects, $O^s$.

\section{ReloPush: Nonprehensile Multi-object Rearrangement}\label{sec:approach}

We describe ReloPush, a planning framework for multi-object rearrangement via pushing. ReloPush breaks down a complex rearrangement task into an efficient sequence of single-object push-based rearrangement subtasks.

\subsection{System Overview}

Given a workspace $\mathcal{W}$, an initial robot pose $p_s$, and a set of objects that need to be reconfigured from their starting poses $O^s$ to their goal poses $O^g$, ReloPush finds a sequence of rearrangements in a greedy fashion. It first constructs a rearrangement graph (PT-graph) that accounts for robot kinematics, push stability, and workspace boundary constraints. Using graph search, ReloPush searches for the collision-free object rearrangement path of lowest cost. If such a path is found, the graph is updated to mark the rearranged object as an obstacle, and the planner is invoked again to find the next rearrangement of lowest cost. If the path found passes through a blocking object, ReloPush displaces that object out of the way first. If the path violates the workspace boundary, ReloPush displaces the object to be pushed until the path to its goal meets the boundary constraint. If the path is infeasible (i.e., fails to find a motion to approach the object to push), it replans with next rearrangement candidate that has the next lowest cost.
This process is repeated until a full rearrangement sequence for all objects is found. An overview of our architecture is shown in~\figref{fig:architecture}.


\begin{figure}
\begin{subfigure}{0.132\textwidth}
\includegraphics[width=\linewidth]{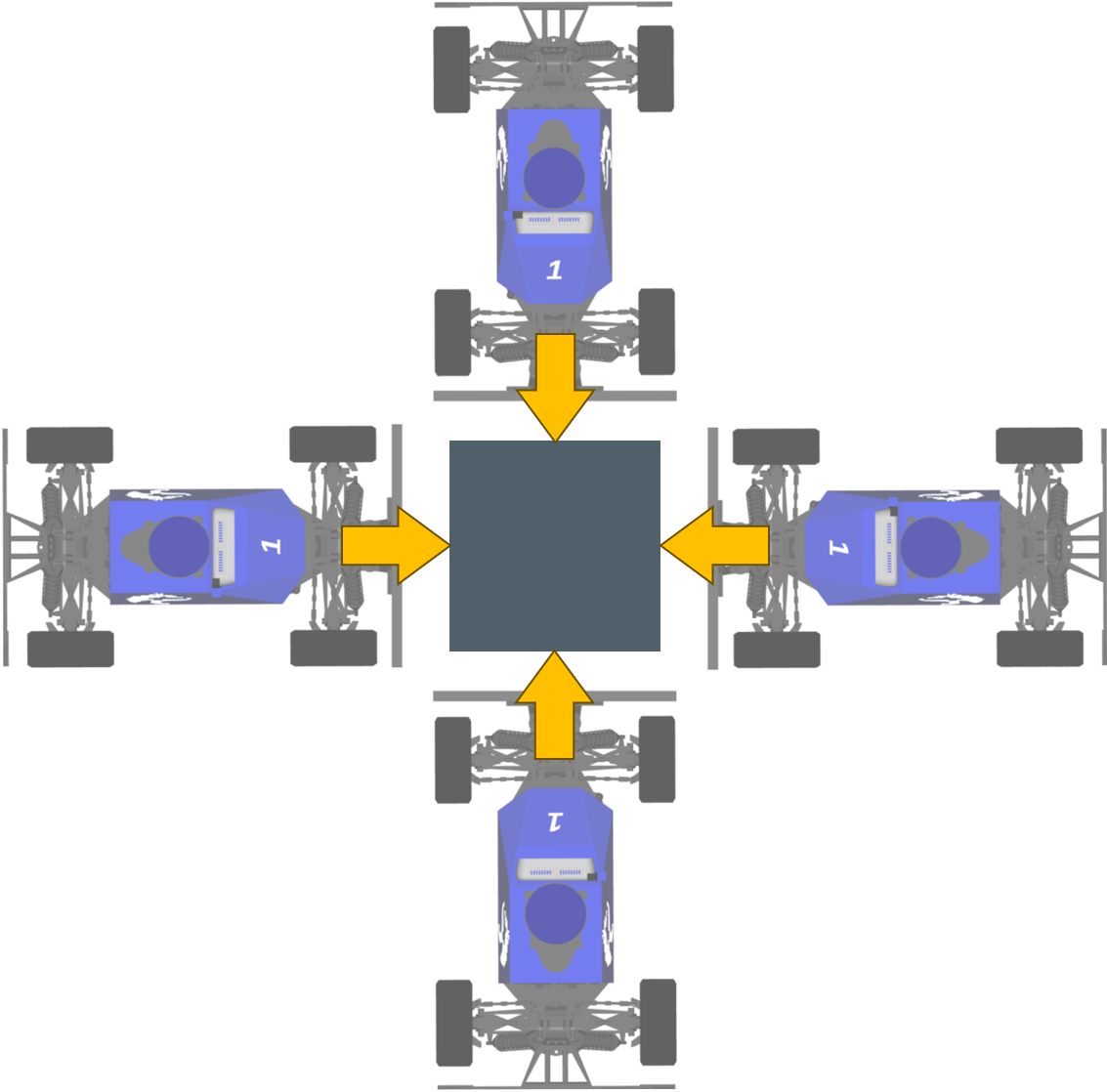}
\caption{Pushing poses}
\label{fig:ptgraph1}
\end{subfigure}
\begin{subfigure}{0.17\textwidth}
\includegraphics[width=\linewidth]{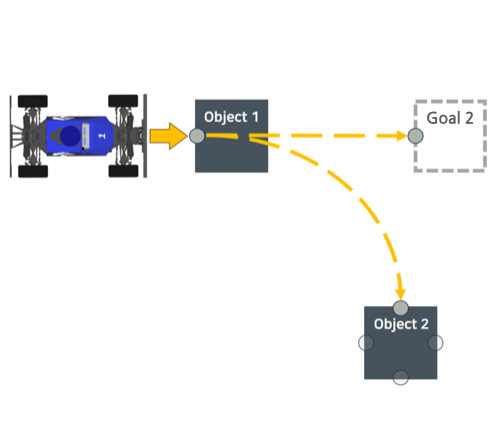}
\caption{Path Plan}
\label{fig:ptgraph2}
\end{subfigure}
\begin{subfigure}{0.165\textwidth}
\includegraphics[trim={0 30 0 0},clip,width=\linewidth]{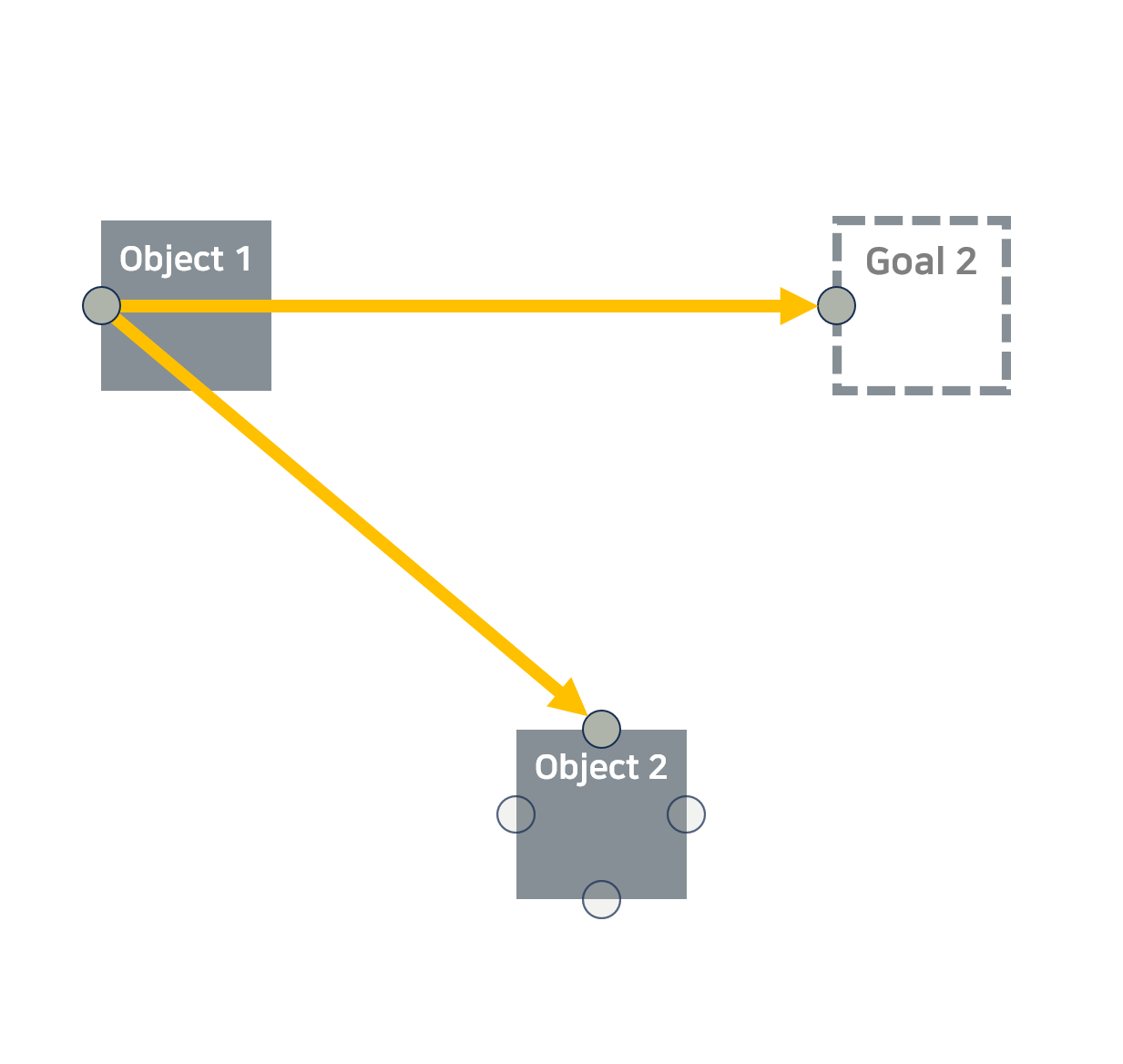}
\caption{PT-graph}
\label{fig:ptgraph3}
\end{subfigure}
  \caption{PT-graph generation. (\subref{fig:ptgraph1}) First, every object is assigned $K$ pushing poses (e.g., a cubic object has 4 pushing poses). (\subref{fig:ptgraph2}) For any pair of pushing poses, we check if a collision-free path that respects the steering limit for quasistatic pushing can be drawn. (c) For each valid path, we construct a directed edge between its start/goal vertices. }
  \label{fig:ptgraph}
\end{figure}

\subsection{Push-Traversability Graph}\label{sec:graph}

A \emph{traversability} graph (T-graph) is a representation of how movable objects can be reconfigured in a cluttered scene~\citep{nam2021fast}. In its original form, vertices represent (starting and goal) positions of objects and edges represent collision-free transitions between them. By searching the graph, a collision-free rearrangement plan can be found. 

Here, we build on the T-graph representation to introduce the \emph{push-traversability} graph (PT-graph) $G(V,E)$, which not only captures the spatial relationships among movable objects but also integrates the kinematic constraints of the pusher and push-stability constraints of objects within the edges. Because in push-based manipulation of polygonal blocks, the block orientation is important, each vertex in our graph ${v_i \in V}$ represents a valid robot \emph{pushing pose} $p_i$, i.e., a pose from which the pusher can start pushing a block (see~\figref{fig:ptgraph}). 

\begin{figure}
\begin{subfigure}{0.24\textwidth}
\includegraphics[width=\linewidth]{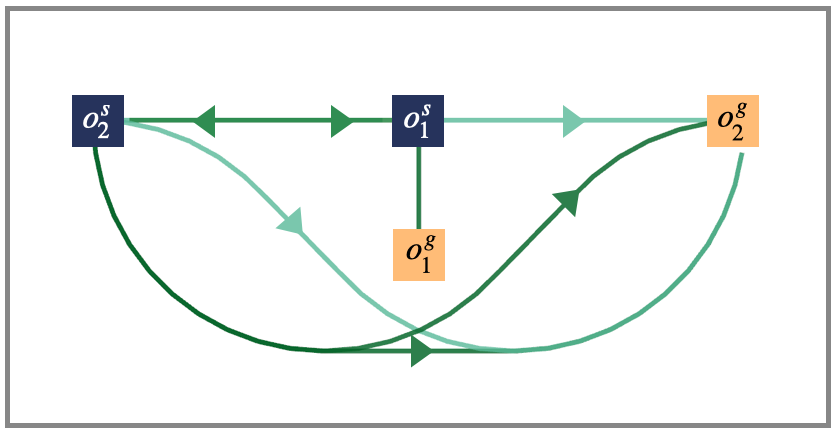}
\caption{Object traversability.}
\label{fig:tgraph1}
\end{subfigure}
\begin{subfigure}{0.24\textwidth}
\includegraphics[width=\linewidth]{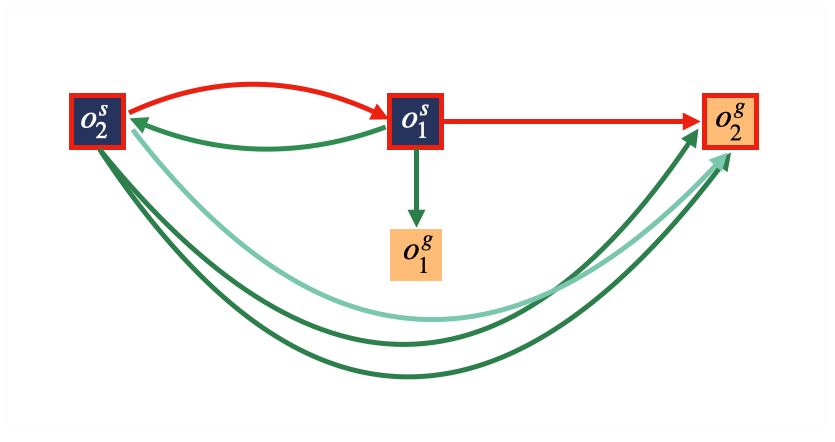}
\caption{PT-graph for the task in (\subref{fig:tgraph1}).}
\label{fig:tgraph2}
\end{subfigure}
  \caption{(\subref{fig:tgraph1}) Two objects (navy squares) need to be rearranged to goal poses (yellow squares). (\subref{fig:tgraph2}) The PT-graph: nodes are pushing poses and edges are Dubins paths connecting them. By searching the graph, we can determine if any blocking objects need to be removed. For instance, the initial pose of object 1 is found to be blocking the shortest rearrangement of object 2 (red path).}
  \label{fig:tgraph}
\end{figure}


For any pair of vertices $(v_s,v_g)$ representing a pair of start and goal \emph{pushing poses} $(p_s,p_g)$, a directed edge $e$ is formed from $v_s$ to $v_g$ if the optimal path from $p_s$ to $p_g$ is collision-free and within the workspace boundary. Optimality in the transitions is motivated by the heavily constrained problem domain which further demands efficient use of space. For a nonholonomically constrained mobile robot, the optimal path from $p_s$ to $p_g$ is a Dubins curve, and can be synthesized using L, R, and S primitives corresponding respectively to left, right, and straight motion~\citep{dubins1957curves}. To account for push stability, the L/R primitives are implemented using a minimum turning radius $\rho$ that ensures stable pushing under the quasistatic assumption~\citep{King-RSS-13,talia2023pushr,lynch1996stable,goyal1991planar}. If the Dubins curve is collision-free and within the boundary, a directed edge is constructed from $v_s$ to $v_g$, and assigned a weight that is equal to the length of the curve. The edge direction is dictated by pushing stability constraints (e.g., contact cannot be maintained if the pusher moves backwards). Alg.~\ref{alg:gengraph} describes the graph construction and \figref{fig:tgraph} shows an example.



\subsection{Prerelocation: Change of Starting Pushing Pose\label{sec:prerelocation}}
 


Often, an edge between two vertices cannot be formed because the Dubins curve connecting them violates the workspace boundary. This is especially common due to the limited turning radius $\rho$ imposed by the push stability constraint (see~\figref{fig:example}). Our insight is that a small change in the starting pushing pose might allow for an optimal, collision-free rearrangement that lies entirely within the workspace boundary. To this end, we leverage prior work on the classification of Dubins curves~\citep{shkel2001classification,lim2023circling}. Intuitively, because of the robot's kinematic constraints, if the start and goal poses are ``too close'', the Dubins curve connecting them will tend to require a sequence of wide turns that violate the workspace boundary. In particular, if the Euclidean distance $d$ between the start and goal poses, normalized by the turning radius $\rho$, is smaller than a threshold $d_{th}$, the Dubins curve connecting them is a \textit{Short} path that will likely include an excessive turning arc. If $d>d_{th}$, the corresponding Dubins curve is a \emph{Long} path that will likely not require excessive wide turning. This threshold can be found to be $d_{th} = \lvert \sin\alpha \rvert + \lvert \sin\beta \rvert + \sqrt{4-(\cos\alpha + \cos\beta)^2}$, where $\alpha$ and $\beta$ represent the start and goal orientations with respect to the line connecting them and transformed to be horizontal~\citep{shkel2001classification,lim2023circling}. \figref{fig:example} shows two paths starting from different poses and leading to the same goal pose via a \textit{Long} and a \textit{Short} path. By moving the object's starting pose slightly towards the bottom, a \textit{Long} Dubins curve leading to the goal pose can be found to be within bounds.




\begin{figure}
    \captionsetup{skip=0pt}
    \centering
    \includegraphics[width=\linewidth]{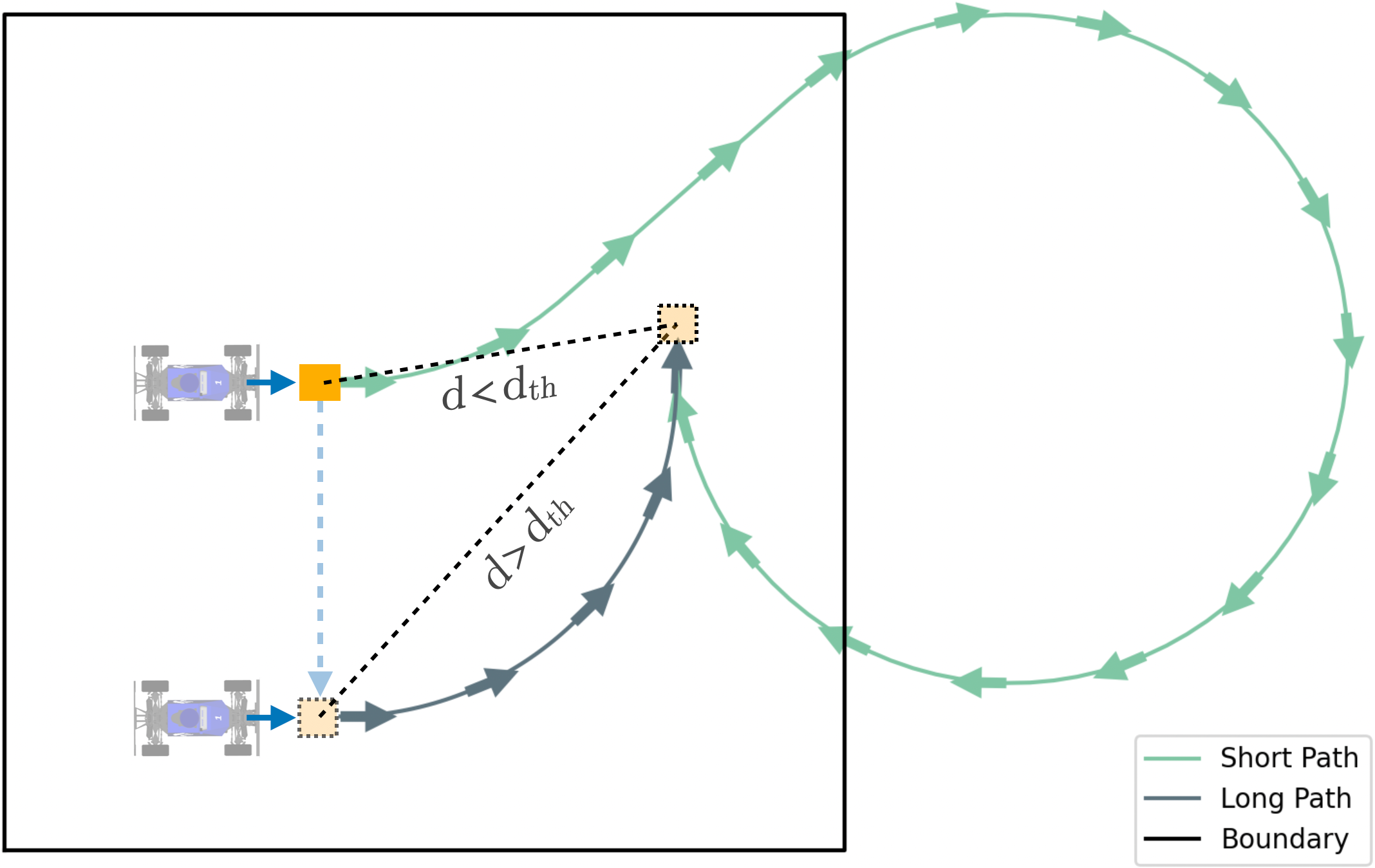}
    \caption{Two Dubins curves with the same goal pose (top right) and maximum turning radius. When the start pose is too close to the goal ($d\leq d_{th}$), the resulting Dubins curve (green color) is a \emph{Short Path} involving large turns violating the workspace boundary. Using Dubins path classification~\citep{shkel2001classification,lim2023circling}, we can determine a \textit{prerelocation} of the object's starting pose to allow reaching the goal via a \emph{Long Path} ($d> d_{th}$) which will involve smaller turning arcs (gray color).}
    \label{fig:example} 
\end{figure}


For instances where an object rearrangement path is found to be out-of-bounds and meet the condition of a \textit{Short} path (with excessive turning, i.e., $d\leq d_{th}$), we attempt to find another start pose $p_s'$ that can be connected to $p_g$ via a collision-free \textit{Long} Dubins curve and that is reachable by the robot from $p_s$. We refer to this change of starting pose from $p_s$ to $p_s'$ as a \textit{Prerelocation}. To find $p_s'$, we evenly sample configurations along the pushing directions of the object (see~\figref{fig:ptgraph1}), and choose a configuration that meets the conditions above, while requiring minimal displacement $||p_s-p_s'||$ from its initial configuration. This process is abstracted as the function \textsc{FindPreRelo} in Alg.~\ref{alg:gengraph}.







\begin{algorithm}
\caption{\textsc{GenGraph}}
\label{alg:gengraph}
{\footnotesize
\textbf{Input:}
Object start poses $O^s$, Object goal poses $O^g$, Workspace $\mathcal{W}$, Min. Turning Radius $\rho$ \\
\textbf{Output:} Push-Traversability Graph $G$\\

1 \ $\mathcal{V} = \mbox{\textsc{GenVertices}}(O^s,O^g)$ \\
\ \codecomment{construct a vertex for each pushing pose for each object start/goal} \\
2 \ $\mathcal{E} = \varnothing$ \codecomment{start with no edge} \\
3 \ \textbf{for each} $v_s$ in $\mathcal{V}$ \\
4 \quad \textbf{if} $v_s$ is vertex of $O^g$ \\
5 \quad \quad \textbf{continue} \codecomment{does not consider push from goals} \\
6 \quad \textbf{else} \\
7 \quad \quad \textbf{for each} $v_g$ in $\mathcal{V}, O_{v_g} \neq O_{v_s}$ \codecomment{vertex of another object} \\
8 \quad \quad \quad $\mathcal{P}_{Dubins}$ = $\mbox{\textsc{ValidDubins}}(v_s,v_g,\rho,\mathcal{W})$ \\
\quad \quad \quad \codecomment{find a Dubins curve from $v_s$ to $v_g$ and check if it is collision-free} \\
\quad \quad \quad \codecomment{and within $\mathcal{W}$} \\
9 \quad \quad \quad \textbf{if} $\mathcal{P}_{Dubins}$ \codecomment{if the Dubins curve is valid} \\
10 \quad \quad \quad \quad $\mathcal{E} \leftarrow \mathcal{E} \cup (v_s,v_g,\mathcal{P}_{Dubins})$ \codecomment{$||\mathcal{P}_{Dubins}||$ to be used as the weight} \\
11 \quad \quad \quad \textbf{else} \\
12 \quad \quad \quad \quad \textbf{if} $\mathcal{P}_{Dubins}$ is out of bounds\\
13 \quad \quad \quad \quad \quad $p_{pre} = \mbox{\textsc{FindPreRelo}}(v_s,v_g,\mathcal{W})$ \\
\quad \quad \quad \quad \quad \codecomment{find different start pose near $v_s$ that induces a valid Dubins curve} \\
14 \quad \quad \quad \quad \quad \textbf{if} $p_{pre}$ \codecomment{a prerelocation made the push valid} \\
15 \quad \quad \quad \quad \quad \quad $\mathcal{P}'_{Dubins}$ = $\mbox{\textsc{ValidDubins}}(v_{pre},v_g,\rho,\mathcal{W})$ \\
16 \quad \quad \quad \quad \quad \quad $\mathcal{E} \leftarrow \mathcal{E} \cup (v_s,v_g,v_{pre})$ \codecomment{$||\mathcal{P}'_{Dubins}||$ is new path weight} \\
17 \quad \quad \quad \quad \quad \textbf{end if} \codecomment{pre-relocation}\\
18 \quad \quad \quad \quad \textbf{end if} \codecomment{out of bound}\\
19 \quad \quad \quad \textbf{end if} \codecomment{validity of dubins curve}\\
20 \quad \quad \textbf{end for} \codecomment{$v_g$}\\
21 \quad \textbf{end if} \codecomment{$v$ not in $O^g$} \\
22 \ \textbf{end for} \codecomment{$v_s$}\\
23 \ \textbf{return} $G(V,E)$

}\end{algorithm}

\subsection{Removing Blocking Objects}\label{sec:blocking}

Extracting a rearrangement path plan can be done by searching the PT-graph using any graph search algorithm. The extracted path may include a vertex that is different from the start and goal vertex. If that is the case, then that vertex corresponds to an object that is physically blocking the rearrangement path. This object needs to be displaced before the plan can be executed. To do so, we follow a similar technique to how we plan \textit{Prerelocations}: we find the closest relocation along the object's pushing directions (see~\figref{fig:ptgraph1}) that unblocks the path execution. This method of finding what object to remove is shown to be complete~\citep{nam2021fast}.


\subsection{Reaching to Push an Object}

To execute a rearrangement plan, the pusher needs to navigate between objects. To do that, we invoke a motion planner to check if there is a collision-free path connecting the robot's pose with a pushing pose. For every object rearrangement, there is at least one motion plan invocation to find a motion to approach an object. If a rearrangement involves a \textit{prerelocation} or the removal of a \textit{blocking object}, an additional motion plan is invoked. Any motion planner could be used but we found convenient to use Hybrid A*~\citep{dolgov2008practical}.







\begin{figure*}[ht]
    \centering
    \makebox[\textwidth][c]{
        \begin{subfigure}[b]{0.18\textwidth}
            \centering
            \includegraphics[width=\textwidth]{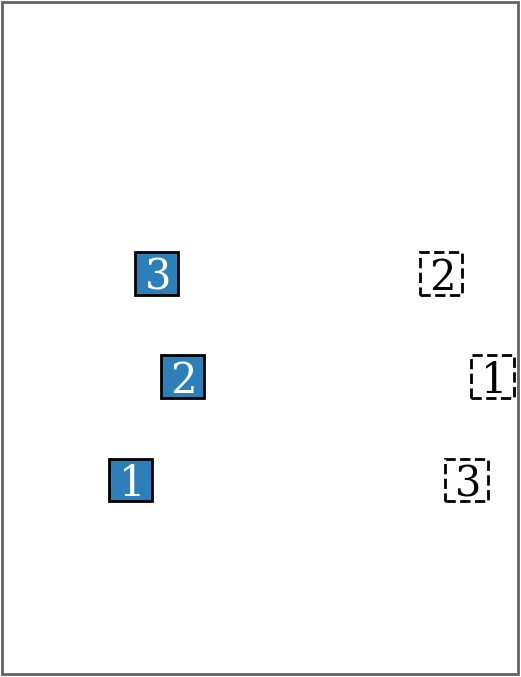}
            \caption{$m=3$}
        \end{subfigure}
        \hfill
        \begin{subfigure}[b]{0.18\textwidth}
            \centering
            \includegraphics[width=\textwidth]{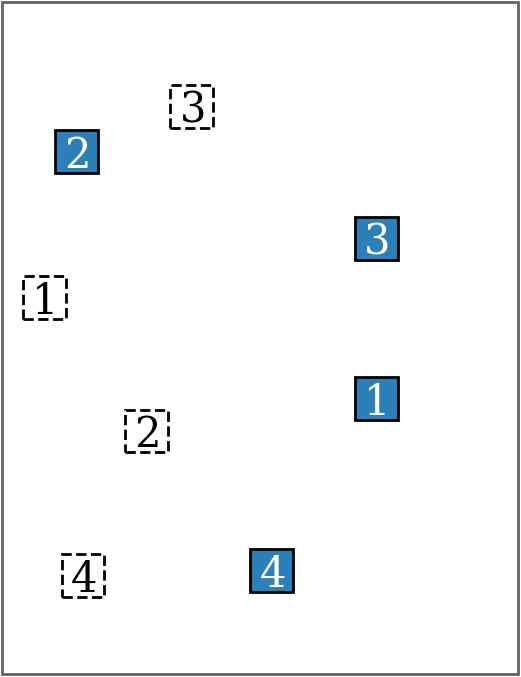}
            \caption{$m=4$}
        \end{subfigure}
        \hfill
        \begin{subfigure}[b]{0.18\textwidth}
            \centering
            \includegraphics[width=\textwidth]{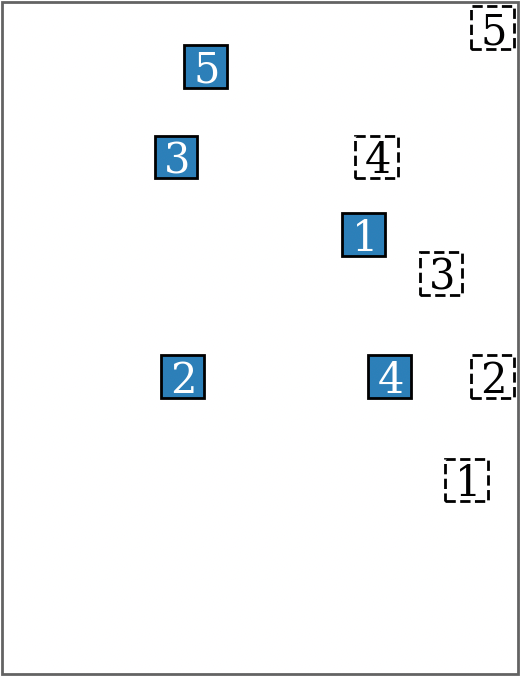}
            \caption{$m=5$}
        \end{subfigure}
        \hfill
        \begin{subfigure}[b]{0.18\textwidth}
            \centering
            \includegraphics[width=\textwidth]{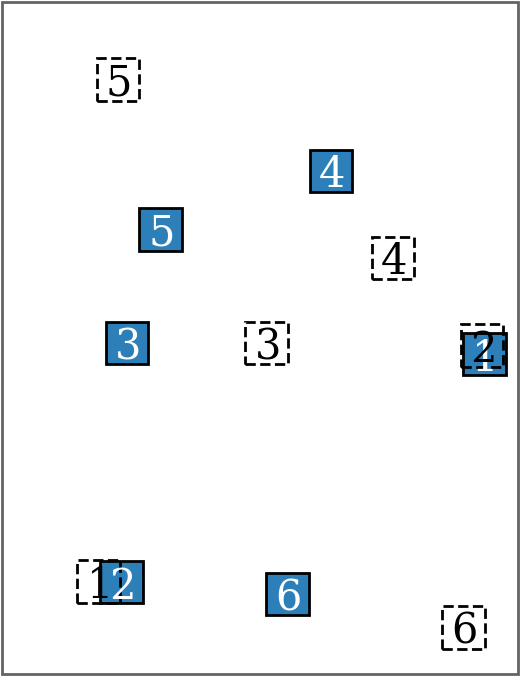}
            \caption{$m=6$}
        \end{subfigure}
        \hfill
        \begin{subfigure}[b]{0.18\textwidth}
            \centering
            \includegraphics[width=\textwidth]{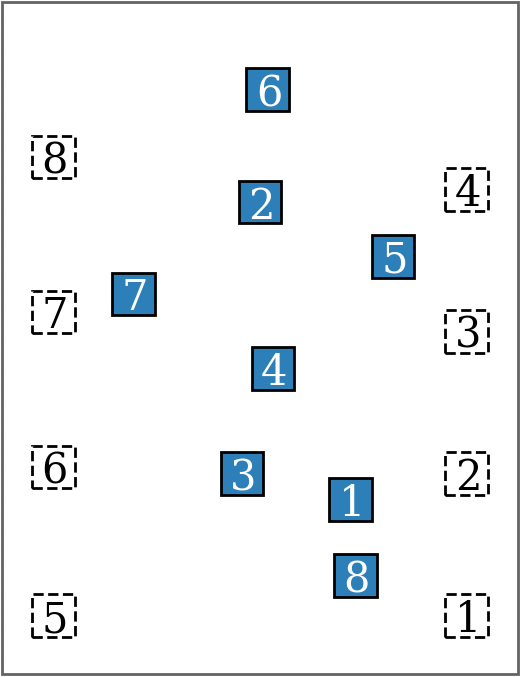}
            \caption{$m=8$}
        \end{subfigure}
    }
    \caption{Evaluation scenarios. Solid squares represent starting object poses and dashed squares represent goal poses.
    \label{fig:scenarios}}
\end{figure*}

\subsection{Analysis of the Algorithm}


\begin{theorem}
Assuming a bounded number of pushing poses per object, $K_{max}$, the graph construction runs in polynomial time.
\end{theorem}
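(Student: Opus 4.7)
The plan is to bound each primitive of Alg.~\ref{alg:gengraph} in terms of $m$ and $K_{\max}$ and then aggregate. I would begin with the vertex count: \textsc{GenVertices} assigns at most $K_{\max}$ pushing poses to the start configuration and at most $K_{\max}$ to the goal configuration of each of the $m$ objects, so $|V| \le 2m K_{\max}$. Treating $K_{\max}$ as a constant, this is $O(m)$ vertices, and this bound is tight up to constants.

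Next I would analyze the nested loop. The outer loop ranges over $v_s \in V$ and the inner over $v_g \in V$ (skipping same-object pairs and goal-vertex sources), so the number of ordered pairs examined is at most $|V|^2 = O(m^2 K_{\max}^2)$. Inside each iteration, \textsc{ValidDubins} constructs the unique optimal Dubins curve between two poses in $O(1)$ time since the minimum turning radius $\rho$ is fixed, and then performs collision checking against the workspace boundary and the remaining objects. Assuming constant-complexity polygonal footprints, this validity check is $O(m)$ per call. Hence the cumulative cost of \textsc{ValidDubins} invocations from the main loop alone is $O(m^3 K_{\max}^2)$.

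The remaining contribution comes from \textsc{FindPreRelo}, invoked only when a path is out of bounds. I would argue this routine samples at most $N_s$ candidate start poses along the $K_{\max}$ pushing directions, where $N_s$ is determined by a sampling resolution tied to $\rho$ and the fixed workspace extent rather than scaling with $m$. Each candidate requires an $O(m)$ \textsc{ValidDubins} recheck, so per-call cost is $O(N_s m)$ and the aggregate across all iterations is $O(m^3 K_{\max}^2 N_s)$. Summing both contributions gives a total runtime of $O(m^3 K_{\max}^2 (1 + N_s))$, which is polynomial in $m$ whenever $K_{\max}$ and $N_s$ are bounded by constants or polynomials.

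The main obstacle is justifying that $N_s$ is polynomially bounded, since the algorithm description does not explicitly fix a sampling rate. I would address this by treating the sampling resolution as an input parameter of the system, or equivalently by observing that the structural classification of Dubins curves into \emph{Long} and \emph{Short} families via the threshold $d_{th}$ from Sec.~\ref{sec:prerelocation} induces only a finite number of qualitatively distinct prerelocation candidates per pushing direction. Either route keeps $N_s$ polynomial in the problem parameters and yields the claimed polynomial time bound for graph construction.
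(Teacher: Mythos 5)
Your proof is correct and follows essentially the same route as the paper: bound the vertex count by $O(mK_{\max})$, the number of candidate edge pairs by $O(m^2K_{\max}^2)$, and the per-edge work by a polynomial, giving a polynomial total. You are in fact somewhat more careful than the paper's own argument, which charges $O(1)$ for collision checking and does not account for the cost of \textsc{FindPreRelo} at all, whereas you correctly identify the $O(m)$ obstacle check and the need to bound the prerelocation sampling resolution $N_s$; the paper instead folds in the (strictly out-of-scope) Dijkstra search and Hybrid A* costs.
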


\begin{proof}
The number of vertices per object is bounded by $K_{max}$, thus, a fully connected graph in our case has $K_{max}\cdot m$ vertices. For each edge, a Dubins path is found in $O(1)$ and its collision checking is done in $O(1)$ assuming a bounded number of configurations checked due to our confined workspace. Searching a graph with Dijkstra's algorithm runs in $O(|V|^2)$ in a directed complete graph (the number of edges dominates). Thus, the runtime for a the rearrangement of $m$ objects reduces to $O(m^3)$. To plan motion to approach an object, we invoke Hybrid A*, whose runtime also reduces to a polynomial expression on the number of objects assuming fixed workspace discretization, resolution of driving directions, and replanning attempts.
\end{proof}

\begin{theorem}
Alg.~\ref{alg:gengraph} is complete if \textsc{FindPreRelo} is complete. 

\end{theorem}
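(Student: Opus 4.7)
The plan is to argue by exhaustive case analysis over the iteration structure of Alg.~\ref{alg:gengraph}, showing that whenever a valid push-edge (possibly realized through a prerelocation) exists between two pushing poses, the algorithm constructs it. First I would make precise what \emph{completeness of GenGraph} means in this context: for every ordered pair of vertices $(v_s,v_g)$ such that $v_s$ is not a goal pose of some object and $v_g$ belongs to a different object, if there exists either (i) a collision-free, in-bounds Dubins curve of turning radius $\rho$ from $p_s$ to $p_g$, or (ii) a prerelocated pose $p_s'$ along the pushing directions of the object at $v_s$ that is reachable from $p_s$ and yields a collision-free, in-bounds Dubins curve to $p_g$, then the corresponding edge is in $E$. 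This matches the set of edges that the algorithm is designed to produce, so it is the right notion to target.

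Next I would verify that the enumeration is exhaustive. \textsc{GenVertices} assigns all $K$ canonical pushing poses to every start and goal configuration of every object, so $V$ contains every candidate pushing pose. The outer loop (lines 3–22) iterates over all $v_s \in V$, the inner loop (line 7) iterates over all $v_g \in V$ belonging to a different object, and the guard on line 4 only rules out pushes originating from a goal pose, which are excluded from the definition of edges. Thus every relevant pair is examined exactly once.

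Then I would perform the case analysis on what happens at each pair. If a direct valid Dubins curve exists, \textsc{ValidDubins} returns it (this is an assumption about the Dubins primitive, which is deterministic and exact under a bounded collision-check resolution) and line 10 adds the edge. If \textsc{ValidDubins} fails and the failure is due to workspace-boundary violation, line 13 calls \textsc{FindPreRelo}; by hypothesis this subroutine is complete, so whenever a qualifying prerelocation exists it returns one, line 15 recomputes the Dubins curve from $v_{pre}$, and line 16 adds the edge. If \textsc{ValidDubins} fails due to collision with another object, the algorithm adds no edge, which agrees with the definition of $E$ since such pushes are handled later at the planning level by the blocking-object removal procedure of Sec.~\ref{sec:blocking} rather than at graph construction.

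The main obstacle I anticipate is pinning down the precise completeness notion so the theorem is non-vacuous: one has to separate the graph-construction responsibility (covered here, modulo \textsc{FindPreRelo}) from the planner's downstream responsibility of removing blocking objects and calling Hybrid A*. I would state this delineation explicitly so that the reduction to completeness of \textsc{FindPreRelo} is clean, and then observe that, apart from that single subroutine, every other step (vertex enumeration, Dubins generation, collision checking, edge insertion) is deterministic and total, so no additional source of incompleteness is introduced.
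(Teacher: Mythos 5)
Your proposal is correct and follows essentially the same route as the paper's proof: exhaustive (finite) enumeration of all relevant vertex pairs, reliability of the Dubins construction and collision check, and reduction of the remaining burden to the assumed completeness of \textsc{FindPreRelo}. You are more explicit than the paper in defining the target completeness property and in delineating which failures (collision-blocked pairs) are deliberately deferred to the downstream blocking-object-removal step, but the underlying argument is the same.
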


\begin{proof}
The collision checking for each Dubins path is complete since it always finds a collision if one exists by checking all grid cells that the path occupies. Thus, Alg.~\ref{alg:gengraph} is complete since it constructs the graph $G$ after a finite number of iterations, assuming that \textsc{FindPreRelo} is complete.
\end{proof}

\begin{theorem}
\textsc{ReloPush} is complete if Alg.~\ref{alg:gengraph} is complete and at least one object pose is reachable by the robot.


\end{theorem}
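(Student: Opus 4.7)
The plan is to assemble the completeness of \textsc{ReloPush} from three ingredients, each of which either is granted by hypothesis or follows from an earlier result in the paper: (i) completeness of the PT-graph construction (Theorem 2), (ii) completeness of graph search on a finite directed graph (e.g., Dijkstra), and (iii) completeness of the motion planner used in \textsc{Reach to Push} (Hybrid A* on a bounded discretization of $\mathcal{W}$ with a resolution-complete guarantee). I would then thread these together by an induction over the number of not-yet-placed objects.

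First, I would formalize what ``completeness'' means here: given a finite set of $m$ objects with specified $O^s,O^g$, \textsc{ReloPush} returns a valid push sequence whenever one consistent with the PT-graph representation exists, and otherwise terminates and reports failure. I would then fix the inductive hypothesis on the number of objects still to be placed, say $k$. For the base case $k=0$, the algorithm trivially terminates with success. For the inductive step, the reachability hypothesis guarantees that at iteration $k$ at least one pushing-pose vertex $v_s$ of some unplaced object is reachable from the robot's current pose; combined with Theorem 2, the current PT-graph contains every kinematically feasible, collision-free, push-stable rearrangement compatible with the current obstacle set (which includes previously placed objects, treated as static per the algorithm overview). A finite search over this graph therefore either returns the lowest-cost candidate rearrangement or proves no such rearrangement exists.

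Next, I would handle the infeasibility-replanning loop. If the top candidate is rejected because \textsc{Reach to Push} cannot connect the robot to $v_s$ (or to a prerelocation vertex or blocker-removal vertex), the algorithm falls back to the next-cheapest candidate. Because the candidate set is finite (bounded by $K_{\max}\cdot m$ vertices and hence by $O(m^2)$ edges per iteration) and Hybrid A* is resolution-complete on the fixed discretization, the replanning loop examines every candidate in finitely many steps and either accepts one or exhausts the list. In the latter case I would conclude that no feasible next move exists in the PT-graph at all, so the problem instance is unsolvable under our representation and the algorithm correctly declares failure. Otherwise, an object is placed, the graph is updated by reclassifying that object as a static obstacle, and the inductive hypothesis applies with $k-1$ unplaced objects. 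Since $k$ is finite and strictly decreases on every successful iteration, \textsc{ReloPush} terminates after at most $m$ outer iterations with a correct verdict.

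The main obstacle I anticipate is the interaction between the greedy outer loop and true solution existence: in principle a greedy choice of a cheapest rearrangement can commit the scene to a state from which no completion is possible even though one existed originally. I would address this by restricting the completeness claim to the operational semantics the authors have defined -- namely, completeness relative to the class of rearrangement sequences discoverable by the greedy iteration over PT-graphs with static, already-placed objects -- and by invoking the blocker-removal procedure (Sec.~\ref{sec:blocking}), which, being shown complete in~\citep{nam2021fast}, ensures that apparent deadlocks caused by intermediate placements are resolved within the current graph rather than requiring genuine backtracking over past placements. With this scope made explicit, the three completeness ingredients above compose to give the result.
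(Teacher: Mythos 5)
Your proposal is correct in the scoped sense you give it, but it takes a genuinely different and considerably more elaborate route than the paper. The paper's entire proof is a three-sentence ``by construction'' argument: every edge of $G$ encodes a feasible (collision-free, in-bounds, kinematically valid, push-stable) displacement, hence any path on $G$ is executable, hence if the robot can reach the starting vertex of such a path the rearrangement is feasible. That is really a \emph{soundness} argument dressed as completeness---it never touches the greedy outer loop, the fallback to next-cheapest candidates, termination, or the possibility that a greedy placement forecloses future solutions. You, by contrast, build an induction on the number of unplaced objects, argue termination from the finiteness of the candidate set ($O(K_{\max}\cdot m)$ vertices per iteration) and the resolution-completeness of Hybrid A*, and---most importantly---explicitly name the greedy-commitment problem and resolve it by restricting the completeness claim to the operational semantics of the greedy iteration. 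What your approach buys is an honest account of where the claim actually holds; what the paper's buys is brevity at the cost of leaving the greedy issue silently unaddressed. One caution on your fix: your appeal to the completeness of blocker removal from the traversability-graph literature does not fully neutralize the greedy-commitment concern, because objects already placed at their goal poses are marked as static obstacles and it is not established that they can be displaced and re-placed if they later block everything; your scoping of the claim is therefore doing the real work there, and you are right to make it explicit rather than lean on the blocker-removal subroutine.
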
 

\begin{proof}
By construction, an edge $e \in E$ between a pair $(v_s, v_g)$ represents a feasible path from $p_s$ to $p_g$. Thus, any path on $G$ is a collision-free path within the workspace $\mathcal{W}$. Therefore, for any path on G, if the robot can plan a path to its starting vertex, a complete rearrangement is feasible. 
\end{proof}





~

\begin{table*}
\captionsetup{skip=0pt}
\caption{Planning performance. Each cell lists the mean and the standard deviation over 100 trials per scenario.\label{tab:sim-results}}%
\label{tab:sim_result}
\centering
\resizebox{\linewidth}{!}{%
\begin{tabular}
{l|c|c|c|c|c|c|c|c|c|c|c|c|c|c|c}
\toprule
 \textbf{Scenario} & \multicolumn{3}{c|}{$m = 3$}& \multicolumn{3}{c|}{$m = 4$} & \multicolumn{3}{c|}{$m = 5$} & \multicolumn{3}{c}{$m = 6$} & \multicolumn{3}{|c}{$m = 8$} \\
\hline
\textbf{Algorithm} & \textsc{ReloPush} & \textsc{NPR} & \textsc{MP}  &  \textsc{ReloPush} & \textsc{NPR} & \textsc{MP}  &  \textsc{ReloPush} & \textsc{NPR} & \textsc{MP}  & \textsc{ReloPush} & \textsc{NPR} & \textsc{MP}  &
\textsc{ReloPush} & \textsc{NPR} & \textsc{MP}  \\
\midrule

$S$ (\%) 
& \textbf{100} & 55 & 52   
& \textbf{100} & \textbf{100} & 73 
& \textbf{80} & 64 & 56
& \textbf{89} & 25 & 3 
& \textbf{86} & 12 & 6\\
\hline


$T_p$ (ms) 
& \textbf{40} (4.3) & 1864 (141.1) & 465 (67.4) 
& \textbf{86} (3.5) & 5376 (131.5) & 956 (36.0) 
& \textbf{146} (6.8) & 9034 (581.5) & 1175 (117.6) 
& \textbf{318} (20.8) & 14489 (682.3)& 1487 (57.4)
& \textbf{529} (23.5) & 25654 (2205.6) & 2073 (149.5) \\

\hline






$L_t$ (m) 
& \textbf{32.3} (3.1) & 42.1 (5.2) & 38.7 (7.3) 
& \textbf{40.3} (0.8) & 45.5 (2.5) & 45.1 (2.5) 
& \textbf{48.3} (3.5) & 60.8 (6.5) & 61.0 (9.1) 
& 77.6 (5.5) & 74.5 (15.2) & \textbf{62.9 (1.0)}
& \textbf{90.3} (6.0) & 105.8 (4.2) & 101.6 (3.5)\\
\bottomrule

\end{tabular}
}
\end{table*}

\begin{table*}[htbp]
\captionsetup{skip=0pt}
\caption{Planning behavior. Each cell lists the mean and the standard deviation over 100 simulated trials per scenario.\label{tab:planning-behavior}}%
\label{tab:sim_result_relo}
\centering
\resizebox{\linewidth}{!}{%
\begin{tabular}{l|c|c|c|c|c|c|c|c|c|c|c|c|c|c|c}
\toprule
\textbf{Scenario} & \multicolumn{3}{c|}{$m = 3$} & \multicolumn{3}{c|}{$m = 4$} & \multicolumn{3}{c|}{$m = 5$} & \multicolumn{3}{c|}{$m = 6$} & \multicolumn{3}{c}{$m = 8$} \\
\midrule
Algorithm & \textsc{ReloPush} & \textsc{NPR} & \textsc{MP}  &  \textsc{ReloPush} & \textsc{NPR} & \textsc{MP}  &  \textsc{ReloPush} & \textsc{NPR} & \textsc{MP}  & \textsc{ReloPush} & \textsc{NPR} & \textsc{MP}  &
\textsc{ReloPush} & \textsc{NPR} & \textsc{MP} \\
\midrule
$N_{pre}$ & 0.9 (0.29) & - & - & 1.0 (0.00) & - & - & 1.6 (0.49) & - & - & 3.0 (0.45) & - & - & 4.4 (0.59) & - & -\\
\hline
$N_{obs}$ & 0.0 (0.00) & 0.0 (0.00) & - & 0.0 (0.00) & 0.0 (0.00) & - & 0.5 (0.52) & 1.0 (0.25) & - & 1.0 (0.00) & 0.5 (0.50) & - & 0.6 (0.49) & 0.0 (0.00) & -\\
\hline
$L_p$ (m) 
& \textbf{8.4} (0.1) & 19.1 (4.4) & 17.8 (4.3) 
& \textbf{8.5} (0.10) & 18.1 (0.8) & 17.3 (0.9) 
& \textbf{11.2} (0.5) & 29.6 (10.1) & 37.6 (7.2) 
& \textbf{13.9} (0.9) & 37.2 (7.8) & 35.6 (0.4)
& \textbf{23.7} (0.76) & 59.3 (3.1) & 55.9 (4.0)\\
\bottomrule
\end{tabular}
}
\end{table*}

\begin{table*}
\captionsetup{skip=0pt}
\caption{Results from real-world trials. Each cell lists the mean and the standard deviation over 5 trials per scenario.\label{tab:real-results}}%
\label{tab:real_result}
\centering
\resizebox{\linewidth}{!}{%
\begin{tabular}
{l|c|c|c|c|c|c|c|c|c|c|c|c|c|c|c}
\toprule
 \textbf{Scenario} & \multicolumn{3}{c|}{$m = 3$}& \multicolumn{3}{c|}{$m = 4$} & \multicolumn{3}{c|}{$m = 5$} & \multicolumn{3}{c|}{$m = 6$} & \multicolumn{3}{c}{$m = 8$}\\
\hline
\textbf{Algorithm} & \textsc{ReloPush} & \textsc{NPR} & \textsc{MP} & 
\textsc{ReloPush} & \textsc{NPR} &  \textsc{MP}  &  
\textsc{ReloPush} & \textsc{NPR} & \textsc{MP}  & 
\textsc{ReloPush} & \textsc{NPR} & \textsc{MP}  &
\textsc{ReloPush} & \textsc{NPR} & \textsc{MP}  \\
\midrule

$N_{loss}$
& \textbf{0.0} (0.0) & 0.6 (0.49) & 1.0 (0.0)  
& \textbf{0.0} (0.0) & 0.6 (0.49) & 1.0 (0.63) 
& \textbf{0.0} (0.0) & 0.4 (0.49) & 0.8 (0.75) 
& \textbf{0.2} (0.40) & 1.4 (0.49) & 1.2 (0.40)
& \textbf{0.2} (0.4) & 2.0 (0.63) & 1.6 (0.49) \\
\hline

$T_e$ (s)
& \textbf{96.6} (0.07) & 114.4 (0.49) & 104.0 (4.23) 
& \textbf{121.7} (0.32) & 137.4 (1.05) & 133.7 (0.37) 
& \textbf{148.1} (0.66) & 154.0 (1.46) & 235.8 (1.71) 
& 256.2 (3.08) & 198.9 (2.74) & \textbf{182.97} (1.20) 
& \textbf{274.4} (0.33) & 314.9 (2.28) & 285.9 (1.40)\\
\bottomrule

\end{tabular}
}
\end{table*}
\section{Evaluation}\label{sec:evaluation}


We investigate the efficacy, scalability, and robustness of ReloPush in simulations and hardware experiments (\figref{fig:exp_setup}).

\subsection{Implementation}


\textbf{Experimental Setup}. We implement our framework on MuSHR~\citep{srinivasa2019mushr}, an open-source 1/10th-scale mobile robot racecar, augmented with a 3D-printed flat bumper for pushing (see~\figref{fig:real}), similar to the one used by~\citet{talia2023pushr}, deployed in a workspace of area ${4 \times 5.2} m^2$. Simulations do not involve physics computations -- insights about the pushing performance can be extracted from our real-world experiments. Across simulations and hardware experiments, we assume access to accurate robot localization (in real experiments, we make use of an overhead motion-capture system). We use objects of cubic shape with a side of $0.15m$ and a mass of 0.44 kg. The friction coefficient on the bumper-object surface was measured to be ${\sim 0.73}$. 

\begin{figure}
\begin{subfigure}{0.24\textwidth}
\includegraphics[width=\linewidth]{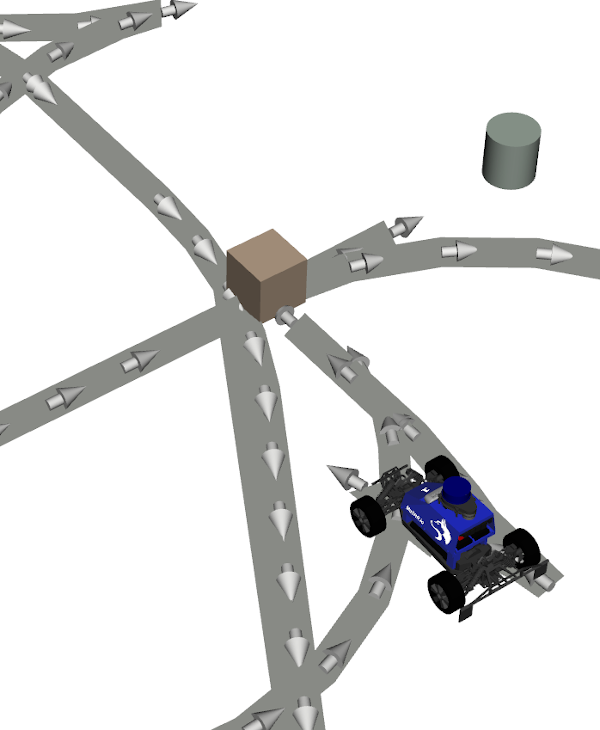}
\caption{Simulated trial.\label{fig:sim}}
\end{subfigure}
\begin{subfigure}{0.24\textwidth}
\includegraphics[width=\linewidth]{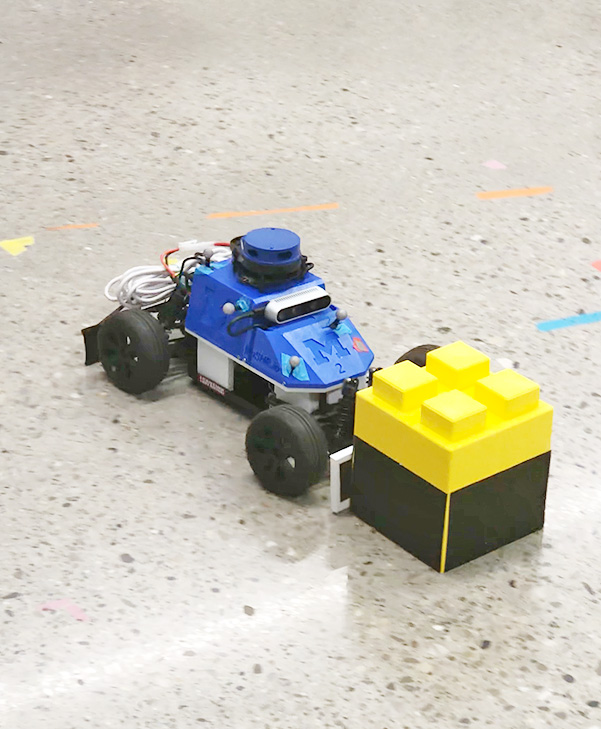}
\caption{Real-robot trial.\label{fig:real}}
\end{subfigure}
  \caption{Stills from simulated (\subref{fig:sim}) and real robot (\subref{fig:real}) experiments.\label{fig:exp_setup}}
\end{figure}




\textbf{Software}. We implement our framework using the Open Motion Planning Library for Dubins path planning~\citep{sucan2012open}, and the code of~\citet{wen2022cl} for Hybrid A$^*$ planning. Across simulated and real-world experiments, we use a Receding Horizon Controller (RHC) based on the implementation of the MuSHR~\citep{srinivasa2019mushr} ecosystem. We run graph construction and search using Boost Graph Library~\citep{boost_graph}. For searching the PT-Graph, we use Dijkstra's algorithm but any graph search algorithm can be used. All planning experiments are conducted on a desktop equipped with an Intel Core i7-13700 CPU and 32G RAM. We share our software implementation online at \href{https://github.com/fluentrobotics/ReloPush}
{\textbf{https://github.com/fluentrobotics/ReloPush}}.


\textbf{Metrics}. We evaluate our system with respect to the following metrics:
\begin{itemize}
    \item S: Success rate -- a trial is successful if a planner successfully finds a feasible rearrangement sequence.
    \item $T_p$: The time it takes to compute a complete rearrangement plan.
    \item $L_t$: The total length of the path that the robot travelled, including the reaching and pushing segments.


    \item $N_{loss}$: The total number of objects the robot lost contact with.

    \item $T_e$: The total time takes to execute a complete rearrangement plan.
\end{itemize}

We also extract insights on the planning behavior of all algorithms using the following indices:
\begin{itemize}
    \item $N_{pre}$: The total number of objects prerelocated to a feasible starting pushing pose (see~\figref{fig:example}).
    \item $N_{obs}$: The total number of removed blocking objects (see~\figref{fig:tgraph}).
    \item $L_p$: The total length of path segments involving pushing.
\end{itemize}

\textbf{Baselines}. We compare the performance of ReloPush against two baselines:
\begin{itemize}
\item \textsc{No-PreRelo (NPR)}: a variant of \textsc{ReloPush} that also uses the PT-graph to handle non-monotone cases but does not plan \emph{prerelocations}. Instead, it invokes Hybrid A$^*$ if a collision-free Dubins path is found, to add edges.



\item \textsc{MP}: a variant of our system that does not make use of the PT-graph at all but rather invokes Hybrid A$^*$ to plan a sequence of rearrangement tasks in a greedy fashion, and thus can only handle monotone cases.
\end{itemize}

\textbf{Experimental Procedure.} We consider a series of rearrangement scenarios of varying complexity (see~\figref{fig:scenarios}) instantiated in simulation and the real world. To extract statistics on planning performance, we instantiated 100 trials of each scenario by locally randomizing the start and goal positions of objects within a range of $\pm 0.05m$ around the nominal instances of~\figref{fig:scenarios}. To evaluate the robustness of our complete architecture, we executed the same scenarios in a physical workspace on a real MuSHR~\citep{srinivasa2019mushr} robot. To ensure fairness in real-robot experiments, we chose instances where all algorithms were successful in planning. We ran each scenario 5 times per algorithm.








\begin{figure}
    \captionsetup{skip=0pt}
    \centering
    \includegraphics[width=\linewidth]{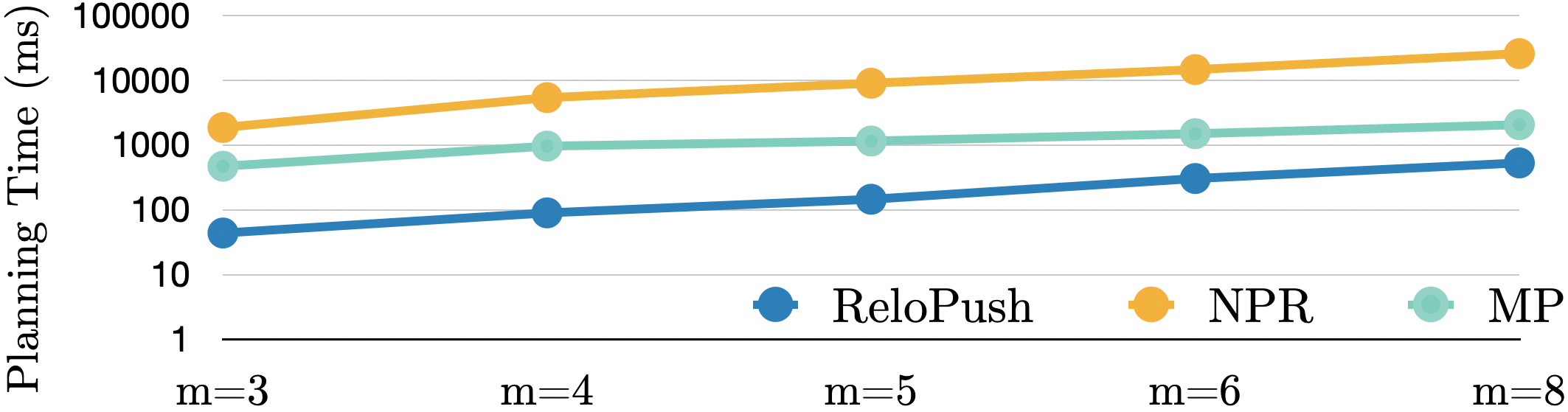}
    \caption{Average planning time across scenarios, shown in logarithmic scale. ReloPush scales well with the number of objects compared to baselines.}
    \label{fig:sim_plan_time_res} 
\end{figure}

\subsection{Results}


\textbf{Planning Performance.} ReloPush dominates baselines in success rate and planning time (see~\tabref{tab:sim-results}, \figref{fig:sim_plan_time_res}), with the gap becoming more pronounced as the clutter (number of objects) increases. This happens because increased clutter is more likely to lead to non-monotone instances. For example, most $m=6$ instances are non-monotone because two objects overlap with goals of other objects. Since \textsc{MP} can only handle monotone rearrangements, it fails more frequently in these harder instances. It is also worth observing that kinematic constraints make some instances harder to solve \emph{regardless} of the number of objects. For example, some of the $m=3$ instances are challenging because $o^s_2$ is situated so close to its goal $o^g_2$ that the optimal path connecting them goes out of the boundary. In contrast, \textit{ReloPush} handled this scenario effectively via \textit{prerelocation}. \tabref{tab:planning-behavior} provides intuition on the planning decisions that ReloPush made enabled increased performance. We see that as clutter increases, ReloPush makes increasingly more workspace modifications (prerelocations and blocking-object removals).

\begin{figure}
\begin{subfigure}{.485\linewidth}
\includegraphics[width=\linewidth]{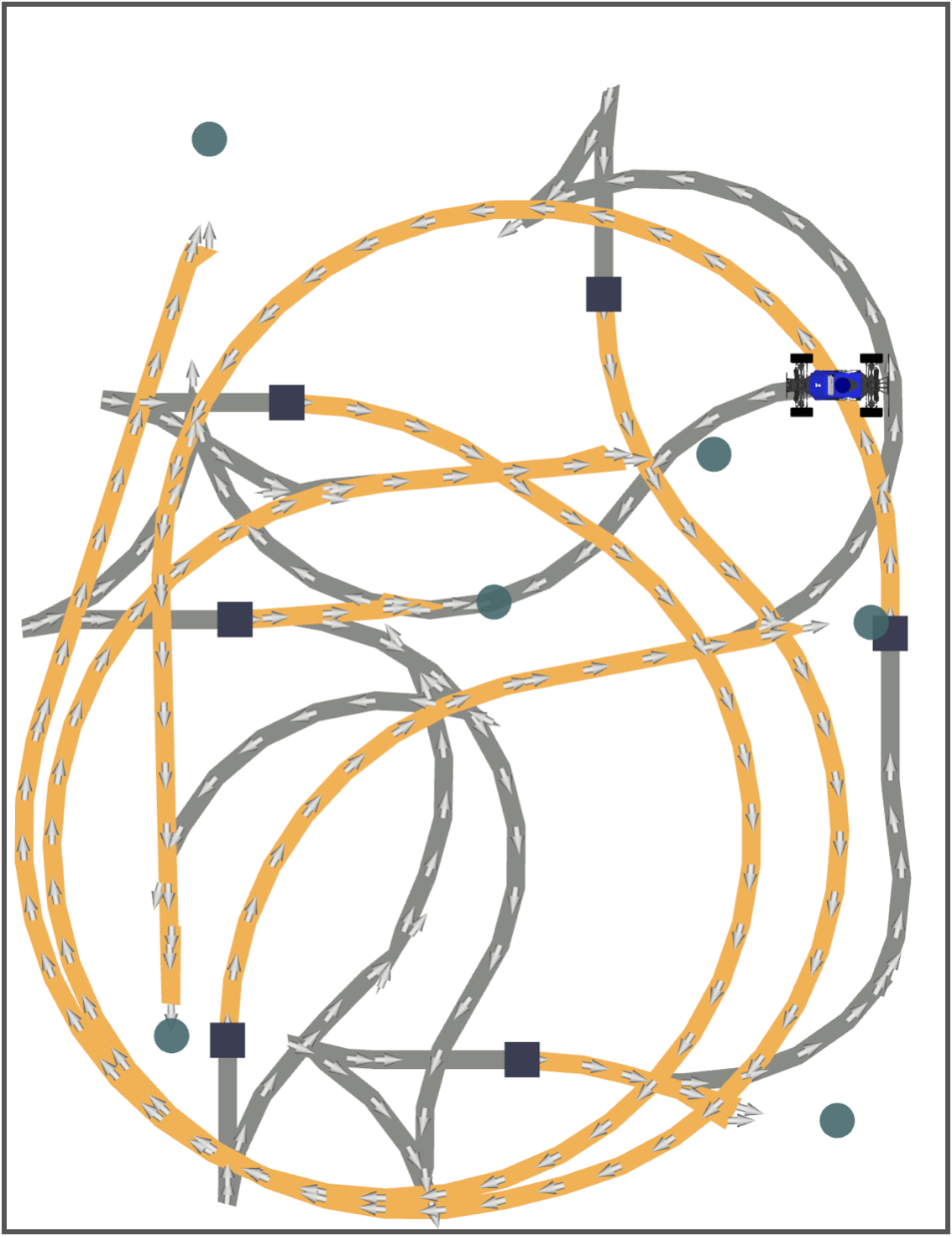}
\caption{\textsc{MP}\label{fig:path_mp}}
\end{subfigure}
~
\begin{subfigure}{0.485\linewidth}
\includegraphics[width=\linewidth]{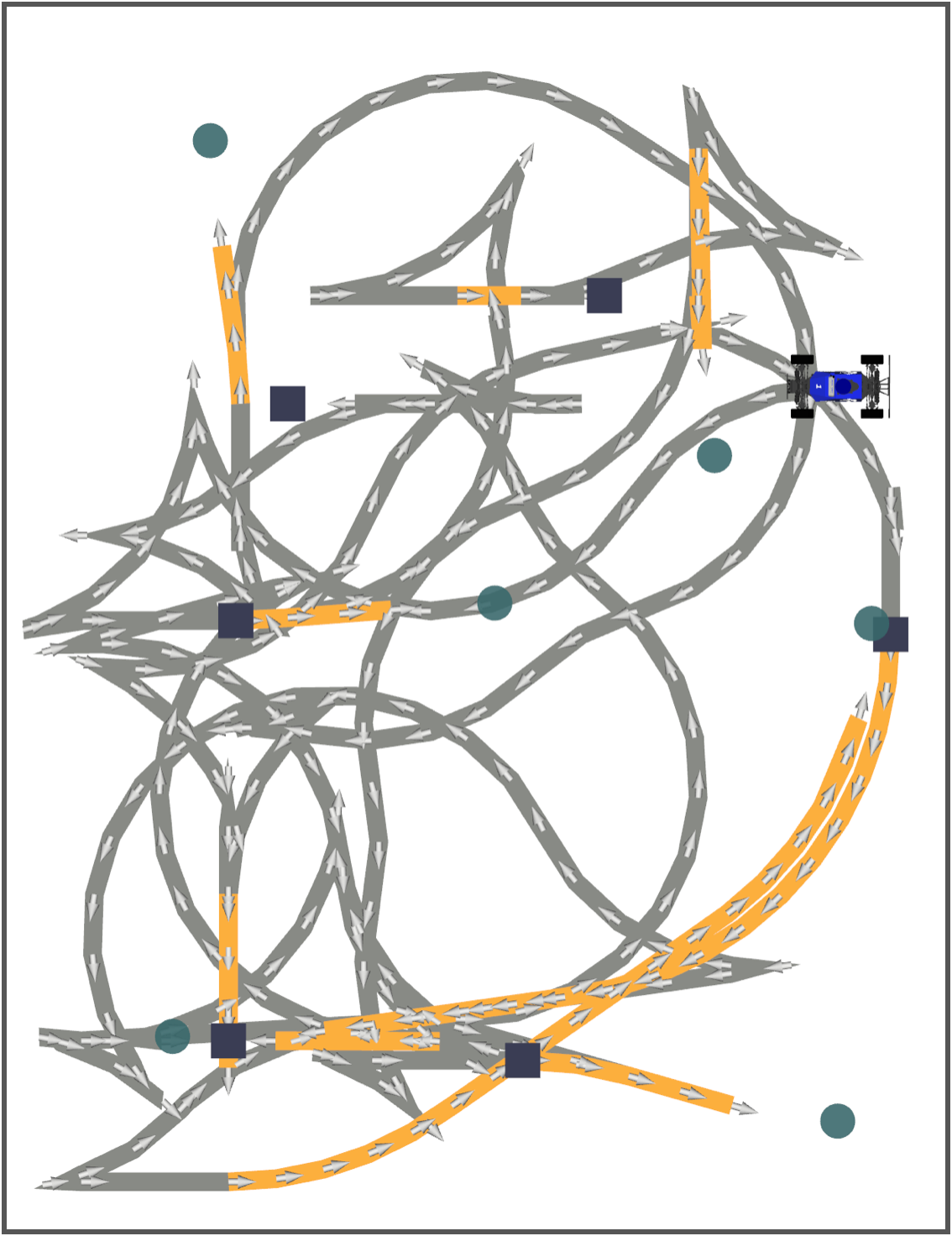}
\caption{\textsc{ReloPush}\label{fig:path_proposed}}
\end{subfigure}
\caption{Paths generated by \textsc{MP} (\subref{fig:path_mp}) and \textsc{ReloPush} (\subref{fig:path_proposed}) for the $m=6$ scenario. Squares and circles represent respectively start and goal object poses. Continuous lines represent planned paths with pushing segments shown in yellow. \textsc{ReloPush} plans substantially shorter pushing segments to minimize the risk of losing contact with an object during execution.\label{fig:path_compare}}
\end{figure}

\textbf{Real Robot Experiments.} ReloPush successfully completed all trials, dominating baselines in terms of execution time (see~\tabref{tab:real_result}). ReloPush never lost contact with any objects, in contrast to baselines. One reason for that is that ReloPush maintains low pushing path length (for better or similar total path length) as shown in~\tabref{tab:sim_result_relo}. The shorter the pushing distance, the lower the risk of losing the object due to model errors and uncertainties. By accounting for this during planning through prerelocations (see~\figref{fig:path_compare}), ReloPush reduces the burden on the path tracking controller which will inevitably accrue errors during execution (the same path tracking controller was used for all algorithms). A video with footage from our experiments can be found at~\href{https://youtu.be/_EwHuF8XAjk}{https://youtu.be/\_EwHuF8XAjk}.








\section{Limitations}\label{sec:discussion}

While ReloPush is capable of handling densely cluttered problem instances, it assumes that at least one object is initially accessible by the pusher. Future work involves enabling the pusher to ``break'' cluttered configurations through impact to make space for planning. To achieve object stability during pushing, we used sandpaper to increase friction at the pusher-object contact, and locked the pusher's steering below the quasistatic limit, which further complicated planning. Ongoing work involves learning a model of push dynamics to relax planning and close the loop for push stability during execution. ReloPush could be further improved by optimizing the use of space when planning \emph{prerelocations} and when removing blocking objects. Extensions to this work will study scenarios involving the rearrangement of unstructured objects like debris, and construction materials.










\balance
\bibliographystyle{abbrvnat}
\bibliography{references}



\end{document}